\documentclass{article}

\usepackage{arxiv}

\usepackage[utf8]{inputenc}
\usepackage[T1]{fontenc}
\usepackage{hyperref}
\usepackage{url}
\usepackage{booktabs}
\usepackage{amsfonts}
\usepackage{nicefrac}
\usepackage{microtype}
\usepackage{graphicx}
\usepackage{amsmath,amssymb}
\usepackage{bm}
\usepackage{mathrsfs}
\usepackage{amsthm}
\usepackage{multirow}
\usepackage{float}
\usepackage{xcolor}

\newtheorem{proposition}{Proposition}

\title{Symmetry-Structured Neural Completion of Islamic Geometric Patterns from Sparse Control Geometry}

\author{
  Hassan Ugail \\
  Centre for Visual Computing and Intelligent Systems \\
  University of Bradford \\
  United Kingdom \\
  \And
  Irfan Mehmood \\
  Centre for Visual Computing and Intelligent Systems \\
  University of Bradford \\
  United Kingdom \\
}

\begin{document}
\maketitle

\begin{abstract}
Islamic geometric patterns are governed by exact rotational symmetry and strict construction rules. This paper treats these rules as formal geometric knowledge and embeds them in a neural completion framework, rather than leaving them to be learned statistically from data. Given sparse control geometry and a target symmetry order, the system completes the pattern as a vector graph by predicting edges and refinements of bounded curves over a candidate lattice whose edges are organised into rotational orbits under the cyclic group. Symmetry is enforced either by constraining predictions within these orbits or by projecting them onto them during inference. The orbit-tied variant provides a constructive guarantee: for any input and any orbit-level selection rule, it produces exact $N$-fold symmetry, preserves anchor points, and keeps all refinements within prescribed bounds. These properties are verified numerically. The study focuses on rotational symmetry, and all quantitative results are obtained from procedurally generated graphs inspired by Islamic geometric design rather than from a historical corpus. On clean inputs, enforcing exact validity produces no measurable loss in fidelity. When control geometry is missing, an unstructured decoder loses fidelity and breaks symmetry; retraining on corrupted inputs recovers much of the fidelity but not exact validity. Symmetry-structured inference, by contrast, keeps violations at zero throughout. The results show that augmentation and symmetry structure address distinct failure modes: augmentation improves fidelity under corruption, while symmetry structure guarantees validity. The framework therefore provides a knowledge-constrained, guarantee-backed approach to neural completion for scalable vector ornaments whose validity depends on exact geometric structure.
\end{abstract}

\keywords{symmetry-structured prediction \and graph neural networks \and knowledge-constrained learning \and structural validity \and Islamic geometric patterns \and vector graphics}

\section{Introduction}\label{sec:intro}

Islamic geometric ornament is defined as much by its geometry as by its visual appearance. Star patterns, rosettes, and related designs are constructed from rotational symmetry, repeated construction elements, fixed anchor points, and coherent vector geometry. Historically, such patterns were laid out using compass-and-straightedge methods rather than drawn freehand \cite{bonner2017,broug2019,necipoglu1996}. Their mathematical structure has been studied extensively, from the symmetry-group analysis of Abas and Salman \cite{abas1995} to the decagonal and quasi-crystalline girih structures found in medieval architecture \cite{lu2007}. In this setting, even a small symmetry violation is not simply a visual imperfection; it can make the result invalid as an instance of the tradition.

This makes Islamic geometric ornament a demanding test case for neural generation. The present study examines this problem using procedurally generated graphs inspired by the tradition, rather than a historical corpus. Modern neural image models can produce visually convincing patterns, but they typically learn symmetry as a statistical regularity and cannot certify that it holds exactly. We therefore frame the problem as completion rather than image synthesis. The input is sparse control geometry: a small set of construction points and anchors, together with a target symmetry order $N$. From this input, the system completes the full pattern as a vector graph by selecting edges from a geometrically meaningful candidate lattice and adding bounded-curve refinements where appropriate. The output is an exactly $N$-fold symmetric ornament that can be exported as scalable vector graphics.

From a knowledge-driven perspective, the construction rules are the domain knowledge. They determine which vertices are admissible, which edges may be selected, how rotational orbits are organised, where anchors must remain fixed, and what counts as a valid completion. The learning problem is therefore not only to infer plausible missing edges. It is to combine statistical prediction with explicit geometric constraints so that every decoded output satisfies the rules of the construction.

Predicting plausible edges is comparatively straightforward: a standard graph network can achieve competitive edge-level accuracy. The harder problem is to ensure that the predicted graph also preserves orbit consistency, exact rotational symmetry, anchor positions, and bounded refinements, especially when the input geometry is incomplete or degraded. Our experiments expose this distinction. On clean inputs, an unstructured decoder matches the structured methods in edge-level fidelity, but still breaks symmetry. When part of the control geometry is removed, the clean-trained unstructured model loses substantial fidelity. Retraining on corrupted inputs recovers much of this fidelity, but symmetry violations remain. By contrast, symmetry-structured inference remains exactly valid in every tested condition, whether the structure is enforced within the architecture or applied by projection at inference.

The paper makes four contributions. First, it introduces a symmetry-structured graph neural completion framework for Islamic geometric patterns. In this framework, constructional knowledge defines the candidate lattice, the orbit structure, the admissible edges, and the validity-preserving decoder, while a graph network predicts edge scores and bounded curve refinements over a lattice whose edges are grouped into rotational orbits under $C_N$.

Second, it establishes validity by construction. The embedded geometry enforces exact $N$-fold symmetry, fixed anchors, and bounded refinement for any input and any orbit-level selection rule. These properties are verified numerically for every evaluated pattern.

Third, it provides a controlled comparison of four decoding regimes: orbit-tied prediction, orbit-level selection projection, full projection of scores and refinements, and unstructured free decoding. The main finding is not that orbit tying is more accurate than projection. Rather, both forms of symmetry-structured inference preserve validity with no measurable cost in fidelity, showing that the essential ingredient is the orbit structure of the candidate geometry.

Fourth, it combines a robustness study with a critique of evaluation in this domain. When control geometry is removed, the unstructured model loses fidelity and accumulates symmetry violations, while the structured methods remain valid. At the same time, a procedural template achieves a high raw overlap score by selecting too many edges, demonstrating that overlap alone is insufficient for evaluating geometric completion.

Overall, the results show that exact geometric validity can be enforced without sacrificing fidelity. They also show that when the control geometry is incomplete, symmetry-structured inference returns valid completions that unstructured decoding does not. This is the paper's central contribution to knowledge-driven artificial intelligence: it demonstrates how explicit geometric knowledge can constrain a neural predictor so that its outputs are not merely plausible, but certifiably valid.

\section{Related work}\label{sec:related}

\subsection{Islamic geometric patterns and computational ornament}\label{sec:rel-islamic}

Islamic geometric patterns have been studied extensively from both historical and mathematical perspectives. Bonner \cite{bonner2017} describes the polygonal construction techniques used to generate stars and rosettes from underlying tessellations, while Broug \cite{broug2019} presents related methods from a practitioner's point of view. Abas and Salman \cite{abas1995} analyse these patterns through their symmetry groups, Necipo\u{g}lu \cite{necipoglu1996} connects the tradition to surviving architectural scrolls, and Gr\"unbaum and Shephard \cite{grunbaum1987} provide the broader mathematical theory of tilings and planar patterns. Lu and Steinhardt \cite{lu2007} further show the sophistication of decagonal and quasi-crystalline girih designs in medieval architecture. This literature motivates our treatment of Islamic ornament as a geometric object rather than as a visual texture.

Most computational work in this area has been procedural. Kaplan \cite{kaplan2000} introduced computer-generated Islamic star patterns, Kaplan and Salesin \cite{kaplan2004} developed the Najm system for designing star patterns in absolute geometry, and Kaplan \cite{kaplan2005} generated star patterns from polygons in contact. Ranjazmay Azari et al. \cite{ranjazmay2023} survey application-based principles and computational directions for Islamic geometric patterns. Grammar-based approaches also explicitly encode construction rules. For example, Sayed et al. \cite{sayed2016} use an auto-parameterised shape grammar to construct motif-based structures from a compact representation of their generating rules.

These approaches are largely correct because the construction rules are specified by hand. Our work is complementary. It learns completion from examples, but correctness is enforced through the orbital organisation of the candidate geometry and a symmetry-structured predictor, rather than through a fully hand-coded grammar or rule set. Procedural and grammar-based systems remain the appropriate reference for faithful reproduction of documented pattern families. A learned model, by contrast, can interpolate within a distribution of examples and can operate when the input geometry is incomplete.

\subsection{Neural generation, vector graphics, and knowledge-guided visual modelling}\label{sec:rel-neural-geometry}

Neural image and vector-graphics models can produce visually plausible results, but they rarely guarantee adherence to specific construction rules. Our setting is therefore knowledge-guided. Constructional geometry, orbit closure, fixed anchors, and symmetry-valid decoding restrict the output space before the network produces a completion.

The problem of reliable inference from incomplete or degraded input appears in several areas of visual modelling. Face-recognition systems, for example, must handle partial, rotated, or low-quality faces, and their reliability can degrade under such conditions \cite{elmahmudi2019deep}. Restoration-based methods have also been evaluated under forensic image degradations \cite{ugail2026forensic}. When reference data are scarce, one-class methods over compact features can test whether an input is consistent with a known source, as in the verification of historical sketches \cite{ugail2026sketches}. This is, in spirit, related to asking whether a generated completion is consistent with a visual tradition. Structured representations have also proved useful in multimedia tasks, such as coarse-to-fine alignment in person search \cite{huang2023beyond}, where explicit modelling of structure can be more effective than relying solely on a global image descriptor. Although our input and output are different, namely sparse construction geometry and a vector graph, the central concern is similar: neural inference must remain reliable when the available evidence is incomplete.

A further line of work generates vector content directly. SketchRNN represents drawings as stroke sequences \cite{ha2018}, DeepSVG models vector graphics hierarchically \cite{carlier2020}, differentiable rasterisation connects vector parameters to raster losses \cite{li2020}, and Im2Vec synthesises vector graphics without vector supervision \cite{reddy2021}. Our output is also vector-valued, but it is not a free sequence of strokes or paths. It is a structured graph selected from a candidate lattice, with symmetry and refinement constraints imposed directly on that graph. The bounded curve refinement used here, in which a learned per-edge parameter displaces an inserted control point within a fixed bound, is related to learned subdivision operators that apply constrained local deformations while preserving interpolatory structure, including neural tension operators for curve subdivision across constant-curvature geometries \cite{ugail2026neuraltension}. We adopt the same principle of bounded, interpretable refinement but tie it to rotational orbits so that it remains exactly symmetric.

The structural nature of the problem also connects this work to geometric deep learning and equivariant neural networks. Geometric deep learning provides a framework for computation on non-Euclidean and symmetry-structured domains \cite{bronstein2017,han2025}. Group-equivariant convolutional networks use group actions to share parameters across transformed inputs \cite{cohen2016}, with related theory for compact groups \cite{kondor2018}, planar rotation and reflection groups \cite{weiler2019}, and Euclidean-equivariant graph networks \cite{satorras2021}.

Our motivation overlaps with this literature, but the mechanism is different. We do not require network weights to commute with a group action on a grid, sphere, or point cloud. Instead, we organise candidate edges into rotational orbits and enforce consistency directly on the edge scores and refinements. The guarantee therefore applies to the discrete output graph, rather than only to an equivariant feature map inside the network. Even an equivariant backbone would still require orbit-level tying or projection at the output, because a threshold or top-$K$ selection rule could otherwise divide an orbit. For this reason, the relevant comparison in this work is not with a feature-space-equivariant baseline but with output-level orbit tying, output-level projection, and unconstrained decoding.

\subsection{Knowledge-constrained graph learning and trustworthy neural prediction}\label{sec:rel-graph-constraints}

Graph neural networks are widely used for learning over relational structures, including message-passing networks, attention-based models, and inductive representation-learning methods \cite{khemani2024gnnreview}. Our predictor builds on this general machinery, including graph convolution \cite{kipf2017}, inductive graph representation learning \cite{hamilton2017}, and graph attention \cite{velickovic2018}. However, the task is not open-ended graph generation in the sense of GraphVAE \cite{simonovsky2018}, GraphRNN \cite{you2018}, or discrete graph diffusion \cite{vignac2023}, where topology is generated within a broad combinatorial space. In our setting, the candidate edges are fixed in advance by the input control geometry and the admissible construction rules. The task is therefore structured completion: the network scores candidate edges and refinements, and the decoder selects a valid subset of the lattice.

The guarantee sought in this paper also places the work within constraint-aware learning. Much of that literature encourages predictions to respect domain knowledge through soft or optimisation-based mechanisms. Examples include semantic-based regularisation, which introduces logical constraints into the learning objective \cite{diligenti2017}, and primal-dual formulations that enforce constraints on output labels during training \cite{nandwani2019}. Knowledge-driven artificial intelligence more generally incorporates structured relational, symbolic, or domain knowledge into neural prediction. Knowledge-graph representation learning is one such route, where relations among entities shape both representation learning and downstream inference \cite{li2023kgshfp}.

The knowledge used in our work is geometric rather than semantic. Vertices, candidate edges, cyclic orbits, admissibility rules, and anchor constraints define the space in which completion is performed. The model is therefore knowledge-constrained without relying on a conventional textual knowledge graph. It is also stricter than many soft-constraint approaches: the orbit-tied predictor satisfies its symmetry, anchor-preservation, and bounded-refinement requirements by construction. We also evaluate inference-time projection as a natural post hoc repair, which allows us to separate the value of the orbit structure from the particular mechanism used to enforce it.

This work therefore lies at the intersection of knowledge-constrained learning, graph prediction, and structured visual modelling. It treats Islamic geometric ornament as a vector-geometric completion problem in which exact symmetry is not a stylistic preference, but a structural requirement encoded directly in the geometry.

\section{Problem formulation}\label{sec:problem}

\subsection{Control geometry, candidate lattice, and orbit structure}\label{sec:formulation}

A problem instance is a tuple $S=(P,N,z,G^{\star})$ in which $P$ is the sparse control geometry, $N$ is the target rotational symmetry order, $z$ is an optional motif-regime label, and $G^{\star}=(V,E^{\star})$ is the target completed pattern graph. The control geometry consists of a centre, a small set of anchor vertices, and boundary markers derived from traditional construction layouts, and it intentionally carries far less information than the completed pattern. We use the term 'control geometry' throughout and 'construction geometry' when emphasising the link to historical construction practice.

Formulating the task in this way makes the domain knowledge explicit. The sparse control geometry, the admissible candidate lattice, the cyclic orbit partition, and the anchor constraints together define the search space in which the completion is performed. Given $P$ and $N$, the lattice is generated deterministically. It comprises a vertex set $V$ of ring-structured points and a candidate edge set $E_c=\{e_1,\dots,e_M\}$ that enumerates every geometrically admissible connection: ring edges, radial edges, star-polygon chords, and secondary connectors. Each edge carries its endpoints, an edge-type label, geometric features, and an orbit identifier. By construction $E_c$ is closed under the cyclic rotation group $C_N$, so the group acts on candidate edges by $g\cdot e_i=e_j$ for $g\in C_N$, and the orbit of an edge is,
\begin{equation}
\mathcal{O}(e_i)=\{\,g\cdot e_i \,\colon\, g\in C_N\,\}.
\end{equation}
The orbits partition $E_c$. A completed pattern is exactly $N$-fold symmetric precisely when its selected edges form a union of complete orbits and its refinement parameters are constant on each orbit. This orbit structure is central to the paper, as both the guarantee and the experiments are phrased in terms of it.

\subsection{Completion objective and evaluation criteria}\label{sec:objective}

The model maps the featurised candidate lattice to an edge score $s_i$ and a bounded refinement parameter $\alpha_i$ for every candidate edge. A selection policy converts scores into a binary selection $\hat{y}_i$, either by thresholding or by retaining the top-$K$ scoring orbits at a calibrated ratio. The refinement parameter displaces the midpoint of a selected non-structural edge along its normal by $\tanh(\alpha_i)\,\alpha_{\max} L_i$, where $L_i$ is the edge length. Training minimises a weighted binary cross-entropy over candidate edges, together with a count-calibration term that matches the expected number of selected edges to the target and a mild regulariser on the refinement magnitudes. Selection policies are calibrated on validation data only, and no target information is used at inference.

The evaluation keeps fidelity and validity separate. Fidelity is the edge selection $F_1$, with its precision and recall. Calibration is the edge-density error between the predicted and target selections. Validity is measured chiefly by the count of selected-orbit violations, that is, orbits left only partly selected, together with the within-orbit spread of scores and refinements, any drift in the anchor vertices, and any refinement exceeding the bound $\alpha_{\max}$. For the rendered output, we additionally measure rotational self-Chamfer distance as a check on geometric symmetry. Robustness consists of re-evaluating all of these quantities with the input control geometry corrupted. We fix this protocol before any experiments because the paper's central argument is that fidelity alone cannot certify a completion in this domain.

\section{Method}\label{sec:method}

\subsection{Pipeline and candidate geometry construction}\label{sec:pipeline}

Candidate construction is the stage at which the geometry enters the learning problem. Rather than allowing the network to invent arbitrary shapes, we derive the admissible vertices, edges, edge types, and orbit identifiers from the control geometry and the chosen symmetry order. Figure~\ref{fig:schematic} sets out the pipeline. Sparse control geometry enters on the left. The candidate lattice is then constructed and organised into rotational orbits; geometric features are extracted for every vertex and edge; a graph network scores the edges and predicts refinements; a symmetry-structured decoder produces the completed graph; and a renderer converts it into strapwork-style vector ornament. At no stage does the model operate on pixels; it works on candidate geometry throughout.

\begin{figure}[t]
\centering
\includegraphics[width=\linewidth]{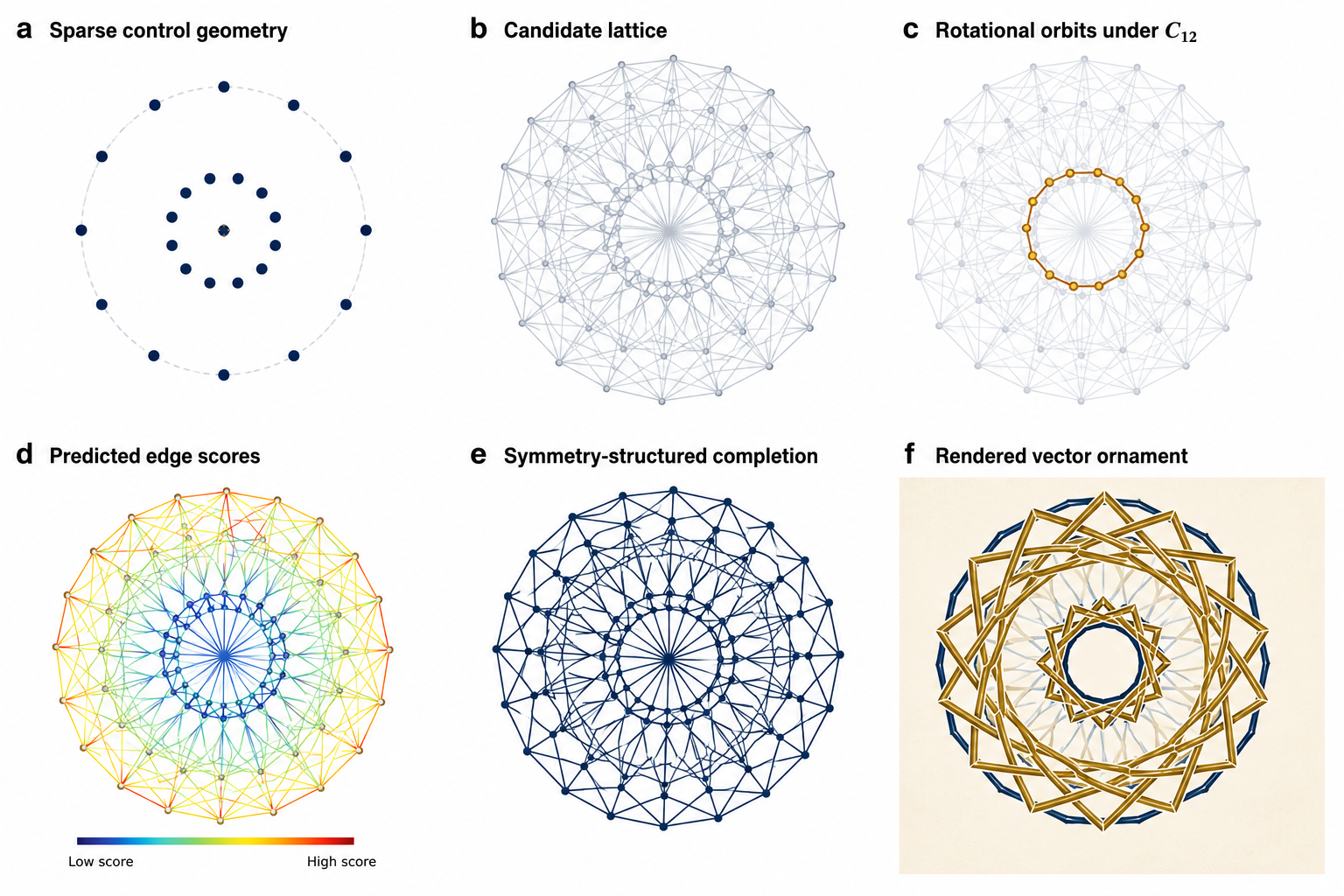}
\caption{Overview of symmetry-structured neural completion. Panel a shows the sparse control geometry, panel b the generated candidate lattice, panel c the organisation of candidate edges into rotational orbits under $C_N$, panel d the neural prediction of edge scores and bounded refinements, panel e the symmetry-structured completion, and panel f shows the rendered vector ornament.}
\label{fig:schematic}
\end{figure}

Constructing the candidate set proceeds in three steps. Vertices are placed on concentric rings whose radii and angular phases follow from the control points, so that each vertex acquires a ring index, a phase index, and a sector index. Candidate edges are then enumerated between admissible vertex pairs and labelled by type: ring edges, radial spokes, star-polygon chords that skip a fixed number of points on a ring, diagonal and secondary connectors, and boundary elements. Finally, each edge receives an orbit identifier derived from its type and the ring and phase labels of its endpoints, which renders it independent of the actual coordinates. Anchor vertices from the control geometry are marked and remain fixed throughout all subsequent stages. Because the vertices lie at exact multiples of $2\pi/N$, the edge set is closed under $C_N$ by construction, and this closure is the property on which the guarantee in Section~\ref{sec:theory} depends.

\subsection{Neural predictor and symmetry-structured decoding}\label{sec:decoding}

The predictor is a message-passing network over the candidate lattice. Vertex features encode position, radius, angle, anchor status, and the ring and phase labels; edge features encode displacement, length, radial and angular differences, the symmetry order, and a one-hot edge type. Several rounds of message passing update the vertex and edge representations, after which a score head produces $s_i$ and a refinement head produces $\alpha_i$ for every edge, optionally conditioned on the symmetry order and the motif-regime label. We make no claim to novelty for the backbone. What matters is how its outputs are bound to the orbit structure. The network predicts within a knowledge-defined space, and the decoding regime determines whether that knowledge is enforced intrinsically, by projection, or not at all.

Four decoding regimes are built on this single predictor. The orbit-tied regime averages the score logits and refinement outputs over each orbit during the forward pass, enforcing $s_i=s_j$ and $\alpha_i=\alpha_j$ whenever $e_i$ and $e_j$ lie in the same orbit, and it does so during both training and inference. The selection-projection regime trains the network without any tying and aggregates scores to the orbit level only at the moment of selection, which tests whether exact topological symmetry can be recovered as a purely inference-time operation. Full projection goes further, averaging both the scores and the refinements over orbits at inference; it is the strongest post-hoc baseline, since it replicates the entire output structure of the orbit-tied model. The unstructured free regime serves as the negative baseline: the network selects edges independently with its own calibrated policy and uses no orbit information. All four share a common backbone, data, and training procedure, so any differences among them are attributable to the symmetry structure alone.

\subsection{Bounded refinement and ornament rendering}\label{sec:render}

We keep the refinement deliberately conservative. Structural edge types remain straight. Ornamental edges are refined by displacing the midpoint along the edge normal by $\tanh(\alpha_i)\,\alpha_{\max} L_i$ and interpolating a quadratic curve through the endpoints. Endpoints are never moved, anchors remain fixed, and the $\tanh$ saturation enforces the bound $\lvert\delta_i\rvert\le\alpha_{\max} L_i$ regardless of the network output, which underwrites the refinement clause of the guarantee.

The renderer is purely cosmetic. It converts the predicted graph into a strapwork visualisation in which bands receive a background-coloured casing so that crossings read as woven, and a fixed visual hierarchy maps the predicted edge types to gold star-polygons, indigo frame straps, and thin pale connectors. Every selected edge is drawn, and none is added, removed, or moved, so the woven appearance arises solely from stroke order. The same geometry is exported as scalable vector graphics, so rendering affects only the output's readability, not its topology or any guarantees.

\section{Construction guarantee}\label{sec:theory}

This section makes explicit how the embedded geometry constrains the output space. The guarantee does not require the learned weights to be accurate, since it follows from the orbit structure and the validity-preserving decoder rather than from the network itself. It rests on four standing assumptions, all of which hold by construction in our system: $E_c$ is closed under $C_N$; the orbit identifiers are correct; the orbit-tied model averages scores and refinements over orbits during the forward pass; and selection operates on orbit-level scores. Under these assumptions, the guarantee below holds. It states the invariants enforced by the orbit-tied decoder and is a consequence of the architecture rather than a statement about learning.

\begin{proposition}\label{thm:validity}
For any input control geometry, any symmetry order $N$, and any selection policy applied to orbit-level scores, the orbit-tied completion satisfies four properties. The selected edge set is a union of complete rotational orbits and therefore exhibits zero selected-orbit violations. The refined geometry rendered on the canonical lattice is exactly $N$-fold rotationally symmetric. Every anchor vertex of the control geometry is preserved exactly. Every refinement displacement satisfies $\lvert\delta_i\rvert\le\alpha_{\max} L_i$.
\end{proposition}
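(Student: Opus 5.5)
The plan is to verify the four clauses in turn, using only the four standing assumptions of Section~\ref{sec:theory} and none of the learned weights. First I will formalise the orbit-tied outputs. Let $\mathcal{O}_1,\dots,\mathcal{O}_R$ be the orbits partitioning $E_c$, and let $\tilde s_i,\tilde\alpha_i$ be the raw head outputs. The tied outputs are
\[
s_i=\frac{1}{|\mathcal{O}_r|}\sum_{e_j\in\mathcal{O}_r}\tilde s_j,\qquad \alpha_i=\frac{1}{|\mathcal{O}_r|}\sum_{e_j\in\mathcal{O}_r}\tilde\alpha_j\qquad (e_i\in\mathcal{O}_r).
\]
This definition makes $s$ and $\alpha$ constant on each orbit, whatever the raw values were. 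The orbit identifiers are correct, so the $\mathcal{O}_r$ really are the $C_N$-orbits.

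\emph{First clause.} I will model any admissible selection policy as a map from the vector of orbit-level scores $(\bar s_1,\dots,\bar s_R)$ to a subset $I\subseteq\{1,\dots,R\}$. This covers both thresholding and top-$K$ over orbits. Each selected edge then has $\hat y_i=\mathbf{1}[r(i)\in I]$, so the selected set is $\bigcup_{r\in I}\mathcal{O}_r$. No orbit is partially selected, which means the violation count is zero. The one subtle point is thresholding: a per-edge threshold on tied scores gives the same decision for every edge in an orbit, since those scores are equal.

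\emph{Second clause.} Take the generator $g$ (rotation by $2\pi/N$), acting on the vertices. The ring vertices lie at exact multiples of $2\pi/N$, so $g$ permutes $V$, and closure of $E_c$ gives $g\cdot e_i=e_j\in E_c$.
\begin{itemize}
\item \emph{Selected set.} Since $e_j\in\mathcal{O}(e_i)$, the first clause gives $\hat y_j=\hat y_i$, so $g$ maps the selected set to itself.
\item \emph{Refined curves.} Since $\alpha_j=\alpha_i$, I need the refined curve of $e_j$ to be the image under $g$ of the curve of $e_i$. Rotation is an isometry, so $L_j=L_i$, and the midpoint maps to the midpoint. The normal maps to $\pm$ the normal. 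The quadratic curve through the two endpoints and the displaced control point is therefore carried to the corresponding curve.
\end{itemize}

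The main obstacle is the orientation of the normal. If the normal for $e_j$ is computed from an arbitrary endpoint ordering, the sign could flip, and the rotated curve would bulge the wrong way. I would resolve this by requiring that the normal be defined from the canonical (ring, phase) endpoint ordering used for the orbit identifiers. That ordering is $C_N$-equivariant because phase indices shift uniformly under $g$, so $g$ maps the normal of $e_i$ exactly to the normal of $e_j$. With this in place the rendered geometry is invariant under $g$, hence under all of $C_N$.

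\emph{Third and fourth clauses.} Both are pointwise. Anchors are marked at lattice construction. Refinement only inserts a midpoint control point and never moves an endpoint, and no stage alters vertex coordinates, so every anchor is preserved exactly. For the bound, $|\tanh(\alpha_i)|<1$ for every real $\alpha_i$, so $|\delta_i|=|\tanh(\alpha_i)|\,\alpha_{\max}L_i\le\alpha_{\max}L_i$. Structural edges have $\delta_i=0$, which satisfies the bound trivially. None of these steps depends on the learned weights, which gives the "any input" quantifier.
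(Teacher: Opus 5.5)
Your proposal is correct and follows the paper's proof clause by clause: orbit-averaged scores force whole-orbit selection, closure of $E_c$ together with tied $\alpha$ makes each refined curve the rotated image of its orbit-mate, no stage moves vertices, and $\tanh$ saturation gives the bound; your explicit handling of the normal's orientation fills in a step the paper merely asserts. One caution on that fix: ordering endpoints by raw phase index is not equivariant where the phase wraps around (the edge from phase $N-1$ to $0$ comes out reversed relative to its orbit-mates), so orient each edge by transporting an orbit representative's orientation under the group element instead, and note that this fails only for a same-ring chord through the centre, which the half-turn maps to itself with its bulge reversed, so such chords must be kept straight for the second clause to hold, a caveat the paper's one-line argument shares.
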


\noindent This constitutes trustworthiness of a knowledge-constrained kind: the network may still return a poor prediction, but whatever it returns obeys the formal geometric rules encoded in the lattice and the orbit structure.

\begin{proof}
Orbit averaging makes the score of every edge in an orbit identical, so any selection rule that depends only on these orbit-level scores, which are the scalars consumed by the selection policy, either selects the whole orbit or none of it, which yields the first property. The scores consumed by the selection policy are pre-sigmoid logits, and orbit averaging is applied to these logits, so the equality of within-orbit scores is exact and is unaffected by any monotone link function applied afterwards. For the second, let $g\in C_N$ and let $e_j=g\cdot e_i$. Closure of the lattice under $C_N$ maps the endpoints of $e_i$ to those of $e_j$. The selection of $e_i$ implies the selection of $e_j$ by the first property, and the tied refinement gives $\alpha_j=\alpha_i$, so the refined curve of $e_j$ is exactly the rotation by $g$ of the refined curve of $e_i$. The union of refined curves is therefore invariant under $C_N$. The third property holds because no stage of the pipeline modifies vertex positions and anchors are excluded from refinement. The fourth follows from the $\tanh$ saturation of the refinement head.
\end{proof}

The guarantee is unconditional: it requires no clean input, no trained network, no particular selection policy, and no post hoc step. It concerns validity rather than accuracy. The predictor may well select a wrong but exactly symmetric completion, which is precisely why fidelity, calibration, and validity are measured separately in Section~\ref{sec:results}.

An unconstrained network admits a closely related repair. Projecting its scores to orbit level before selection enforces the first property, and projecting the refinements as well enforces the second. The constructive model possesses the guarantee intrinsically, whereas projection is a repair that must be remembered and applied. We therefore compare the two on equal terms rather than treating projection as an invalid baseline, and Section~\ref{sec:res-equiv} reports the outcome of that comparison.

\section{Experimental setup}\label{sec:setup}

All experiments are conducted on synthetic, Islamic-geometry-inspired pattern graphs generated by a stochastic procedure over the candidate lattice, with motif families and neutral procedural-regime labels controlling the target distribution. The main configuration uses six hundred training graphs and separate validation and test splits of one hundred and twenty each, across symmetry orders six, eight, ten, and twelve and five motif families. One held-out experiment trains on orders six, eight, and twelve only and evaluates at order ten. Every experiment is repeated over three random seeds, and results are reported as means.

Six methods are compared, and Table~\ref{tab:variants} summarises the role of each. The orbit-tied model and the three decoding regimes for the unconstrained network are as in Section~\ref{sec:decoding}. The procedural template scores candidate edges using hand-designed rules derived from the control geometry and is symmetric by construction, making it a strong non-learned reference. The nearest-neighbour baseline copies the target of the closest training instance. The two neural models are trained identically, with a warm-up and cosine learning-rate schedule, gradient clipping, and restoration of the best validation checkpoint, for 70 epochs at a peak learning rate of $7\times10^{-4}$, on a message-passing backbone of hidden width 96 and 3 rounds. Selection policies are calibrated on validation data, which places the structured policies at orbit-level top-$K$ ratios near $0.4$ and the unstructured policy at free ratios near $0.34$.

\begin{table}[t]
\centering
\caption{Method variants and the question each one answers. Guarantee refers to the exact selected-orbit consistency of the output.}
\label{tab:variants}
\begin{tabular}{@{}lllll@{}}
\toprule
Method & Training constraint & Inference projection & Guarantee & Purpose \\
\midrule
Orbit-tied & orbit averaging & none needed & yes & constructive guarantee \\
Projection (selection) & none & scores to orbits & yes & post-hoc topology repair \\
Projection (full) & none & scores and $\alpha$ & yes & strongest post-hoc baseline \\
Unstructured (free) & none & none & no & negative baseline \\
Procedural template & rule-based & symmetric by design & yes & non-learned reference \\
Nearest neighbour & retrieval & inherits target & yes & memorisation check \\
\bottomrule
\end{tabular}
\end{table}

Robustness is tested under three corruption modes that affect only the network's input. The ground truth, the canonical lattice, and its exact geometric orbit partition all remain clean, so a violation is always counted against the same orbit structure that defines a valid completion. Coordinate jitter perturbs every point independently with Gaussian noise of relative standard deviation up to $0.15$. Missing control geometry zeroes the features of a random fraction of vertices and their incident edges, up to $30\%$. Single-sector corruption jitters the points of one angular sector only, the most asymmetric form of localised corruption. In each case the question is whether symmetry-structured inference remains accurate and valid when the input geometry is imperfect, which is the situation any practical completion system faces.

For the training-control study in Section~\ref{sec:res-control}, we additionally train augmented versions of both predictors. Every training input is then corrupted by the dropout mechanism with a severity uniformly drawn from [0, 30\%], while the targets, the lattice, and the orbit partition are left untouched, and all other training settings remain unchanged.

\section{Results}\label{sec:results}

\subsection{Fidelity on clean inputs}\label{sec:res-fidelity}

Both neural models train stably on every seed, with training and validation losses moving together, so none of the comparisons below is an optimisation artefact. Figure~\ref{fig:mainf1} and Table~\ref{tab:main} report edge-selection fidelity on the in-distribution test set. The orbit-tied model attains a mean $F_1$ of $0.519$, a selection projection of $0.523$, and an unstructured free decoding of $0.520$, with seed-level standard deviations of around $0.015$. The procedural template reaches $0.534$ and the nearest-neighbour baseline stalls at $0.437$, so the task is not solved by memorisation. As every method is evaluated on the same test patterns, the neural variants can be compared through a paired per-pattern analysis rather than by inspecting overlapping error bars.

Across the three hundred and sixty paired patterns, the orbit-tied minus unstructured difference averages $-0.001$, with a bootstrap $95\%$ interval of $[-0.009, 0.007]$ and a Wilcoxon signed-rank $p$ of $0.84$. The orbit-tied minus projection difference averages $-0.004$, with an interval of $[-0.012, 0.004]$ and $p=0.77$. The intervals are obtained from a nonparametric bootstrap of the paired differences over 10,000 resamples, and the Wilcoxon tests use the same differences. The equivalence margin was set at two $F_1$ points in advance, approximately the seed-level standard deviation of the predictors, and both intervals fall entirely within it. Enforcing exact validity therefore incurs no measurable fidelity penalty relative to an unconstrained predictor, and we claim no overlap advantage for the structured methods.

\begin{figure}[t]
\centering
\includegraphics[width=0.9\linewidth]{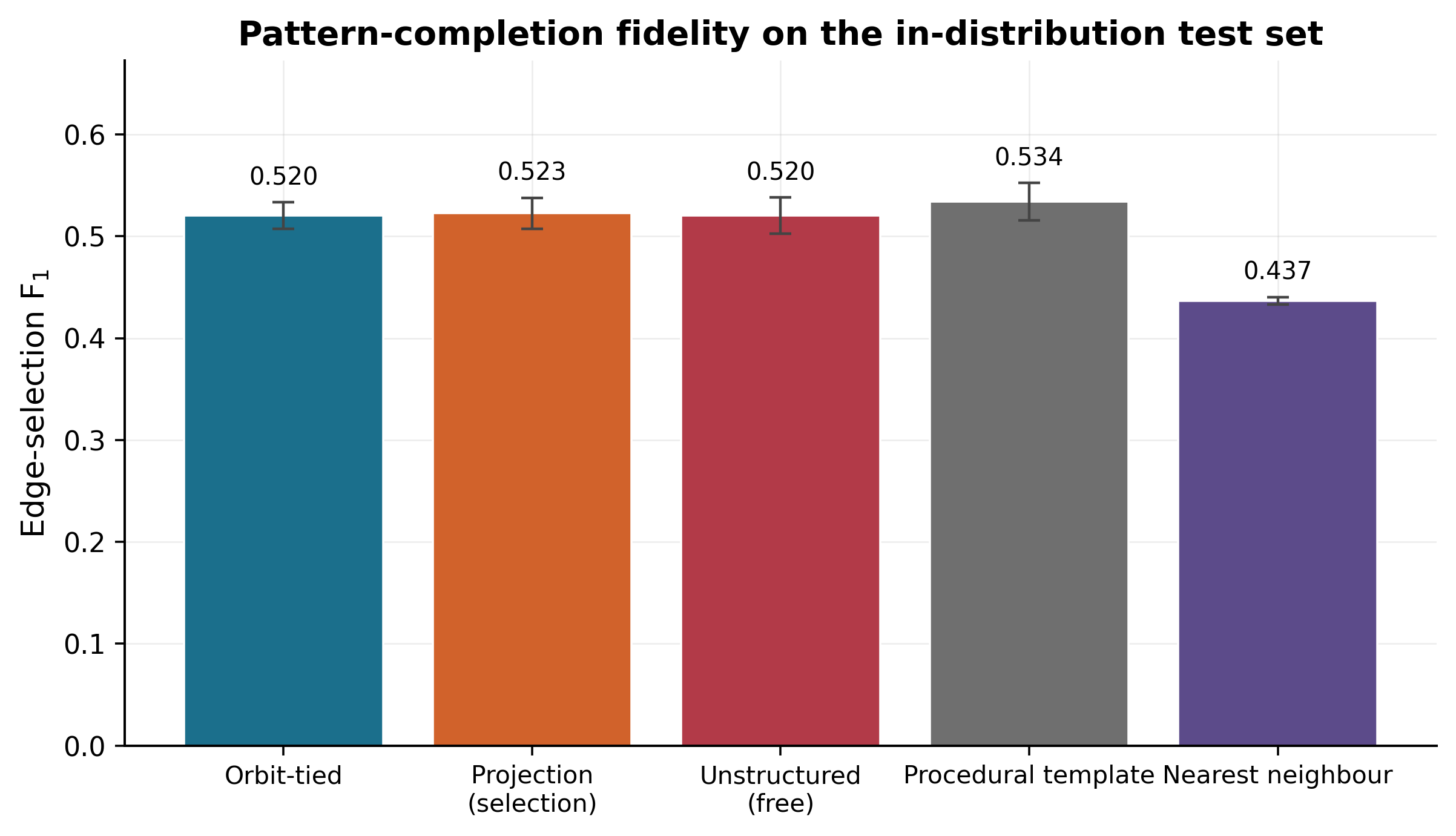}
\caption{Edge-selection fidelity on the in-distribution test set, shown as means over three seeds with standard deviations. All neural variants perform comparably. The template attains the highest raw overlap, and Section~\ref{sec:res-template} shows that it does so only through severe over-completion.}
\label{fig:mainf1}
\end{figure}

\begin{table}[t]
\centering
\caption{Main quantitative results on the clean test set, reported as means with standard deviations over three seeds. Violations count selected-orbit violations per pattern, and density error is the relative deviation of the predicted edge count from the target count, so it reflects the predicted-to-target ratio. The template attains its leading $F_1$ score through low precision and high recall, with a roughly threefold density error, as analysed in Section~\ref{sec:res-template}.}
\label{tab:main}
\begin{tabular}{@{}lccccc@{}}
\toprule
Method & $F_1$ & Precision & Recall & Density error & Violations \\
\midrule
Orbit-tied & $0.519 \pm 0.015$ & $0.498 \pm 0.025$ & $0.575 \pm 0.002$ & $0.088 \pm 0.005$ & $0$ \\
Projection (selection) & $0.523 \pm 0.015$ & $0.496 \pm 0.029$ & $0.588 \pm 0.009$ & $0.092 \pm 0.008$ & $0$ \\
Unstructured (free) & $0.520 \pm 0.018$ & $0.495 \pm 0.024$ & $0.581 \pm 0.020$ & $0.090 \pm 0.006$ & $3.3$ \\
Procedural template & $0.534 \pm 0.019$ & $0.422 \pm 0.024$ & $0.778 \pm 0.005$ & $0.268 \pm 0.018$ & $0$ \\
Nearest neighbour & $0.437 \pm 0.004$ & $0.454 \pm 0.015$ & $0.445 \pm 0.012$ & $0.086 \pm 0.005$ & $0$ \\
\bottomrule
\end{tabular}
\end{table}

\subsection{Structural validity}\label{sec:res-validity}

Figure~\ref{fig:validity} presents the validity audit. Across all seeds and all three hundred and sixty test patterns, the orbit-tied model records zero selected-orbit violations, zero anchor drift, and zero excess beyond the refinement bound, exactly as Proposition~\ref{thm:validity} requires. The audit is label-free: it rotates each selected edge by exactly $2\pi/N$ and checks that the corresponding image edge is also selected, so it does not rely on the system's internal orbit identifiers. Selection projection attains the same zero count by construction.

The unstructured model behaves quite differently. It violates orbit consistency on $3.3$ orbits per pattern on average, even on clean input, while matching the structured methods on $F_1$. This is the first half of the evaluation argument. Two methods can be indistinguishable on overlap yet differ categorically on validity, and a metric suite without a validity term cannot detect the difference. Statistical fidelity and knowledge validity are thus not the same thing: the unstructured model can match the structured methods on overlap-based $F_1$ while still violating the geometry that defines a valid pattern.

Figure~\ref{fig:pair} illustrates this on a single example. The structured and free completions of the same control geometry cover approximately the same edges, and only the structured one is exactly symmetric. On rotational self-Chamfer distance, the rendered outputs agree to within the resolution of a point-sampled metric, placing the orbit-tied model at the sampling floor of $0.006$, free decoding at $0.009$, and the template at $0.010$.

\begin{figure}[t]
\centering
\includegraphics[width=0.85\linewidth]{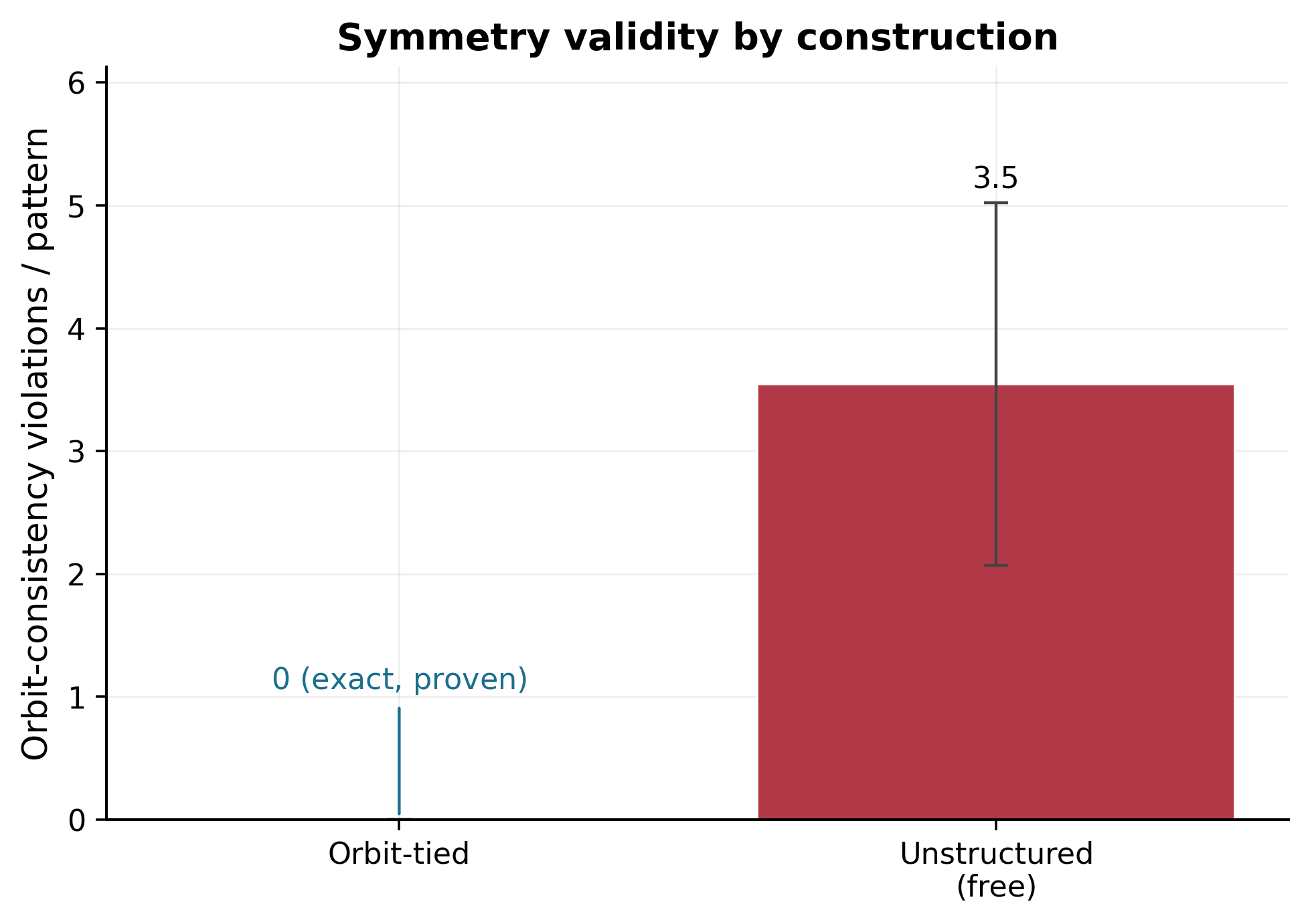}
\caption{Structural validity audit. Orbit-tied decoding eliminates selected-orbit violations entirely, whereas unstructured decoding can violate orbit consistency even on clean input. Selection projection achieves the exact same zero count and is omitted from the chart solely to maintain legible contrast.}
\label{fig:validity}
\end{figure}

\begin{figure}[t]
\centering
\includegraphics[width=0.95\linewidth]{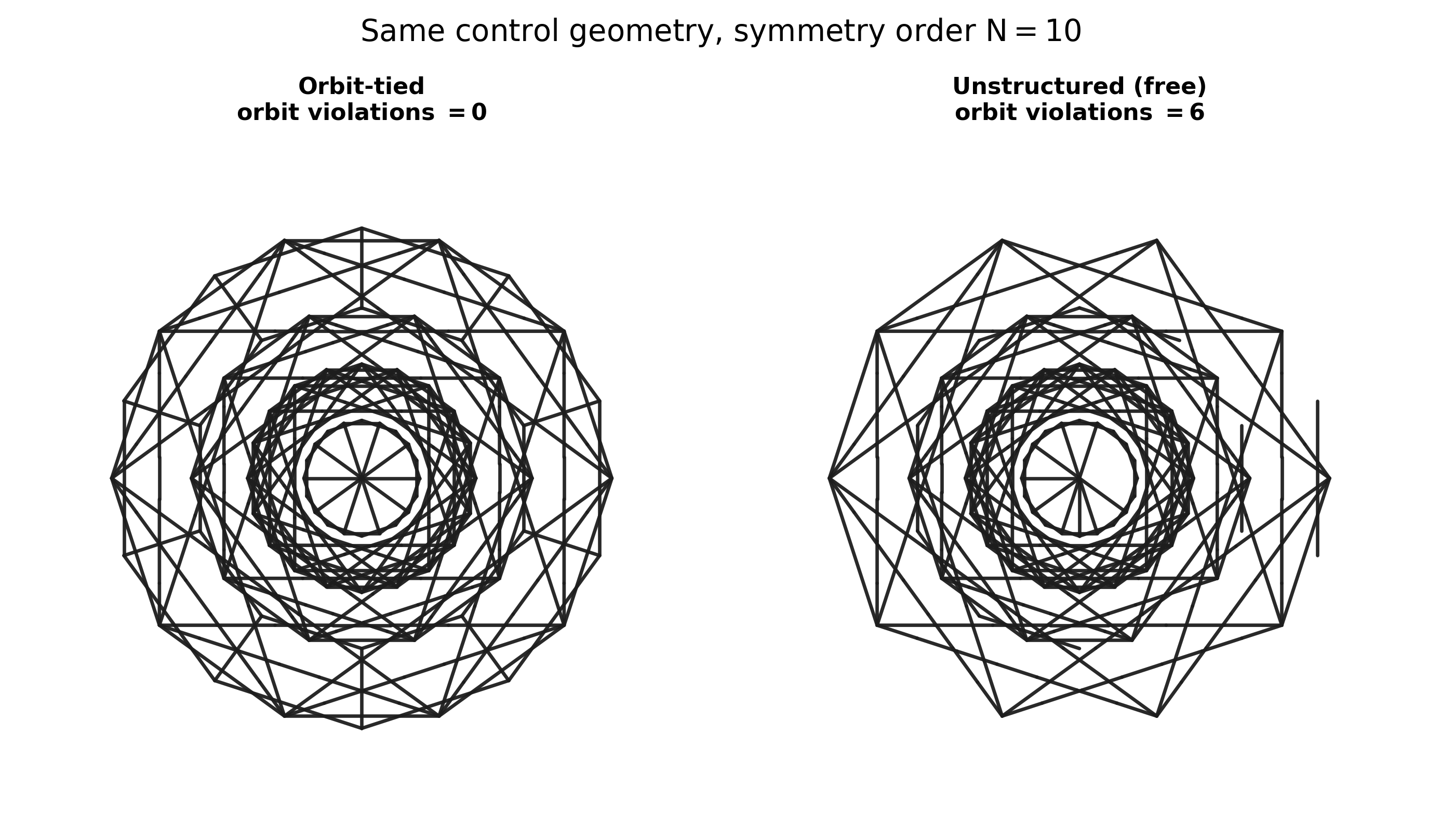}
\caption{Completions of the same control geometry at the held-out order $N=10$. Similar edge-level fidelity can mask structural invalidity, since the unstructured output violates orbit consistency, whereas the orbit-tied output is exactly symmetric.}
\label{fig:pair}
\end{figure}

\subsection{Robustness and the corrupted-input training control}\label{sec:res-robust}\label{sec:res-control}

Figure~\ref{fig:robust} presents the corruption study, and Table~\ref{tab:robust} summarises the strongest condition. Coordinate jitter leaves the fidelity of every method essentially unchanged, even at the largest perturbation, although the free model's violations rise from $3.3$ to about $7$ per pattern, so coordinate noise is not the principal failure mode. Single-sector corruption is similarly benign because orbit-level aggregation distributes a local disturbance across the whole orbit for the structured methods, and the free model is only mildly affected.

Missing control geometry is where the methods diverge. With $30\%$ of the input removed, the free model drops from $0.524$ to $0.415$ in $F_1$, losing precision and recall, and its violations increase from $3.3$ to $37.5$ per pattern. The structured methods move only slightly, from $0.518$ to $0.502$ for orbit-tied and from $0.527$ to $0.509$ for full projection, and both retain zero violations at every severity. The template scarcely accounts for the corrupted features and remains at $0.535$, with its characteristic over-completion. For clean-trained models, then, symmetry-structured inference is far less sensitive to missing geometry and remains exactly valid, although, as it emerges, the fidelity and validity components of this contrast have different causes.

Missing geometry probes whether the predictor remains reliable when part of its knowledge-bearing input is absent. The natural question is whether the fidelity gap reflects the symmetry structure itself or merely a mismatch between clean training and corrupted testing, and whether the augmented variants resolve it. Figure~\ref{fig:control} reports the result, with the corresponding rows folded into Table~\ref{tab:robust}. Under missing-geometry augmentation, most of the unstructured model's fidelity is recovered, from $0.415$ to $0.517$ at $30\%$ missing, so the bulk of the collapse reflects the training distribution rather than the missing structure. The augmented unstructured model, in fact, loses less fidelity at the strongest corruption than the clean-trained orbit-tied model, so symmetry structure has no unique claim to fidelity robustness once training and test conditions are matched.

Validity is where the two diverge. The augmented unstructured model still violates orbit consistency on about 9 orbits per pattern at the strongest corruption, and on about 3 even on clean input. Combining both remedies, the augmented symmetry-structured variants form the strongest configuration overall, maintaining fidelity near $0.52$ across all severities with zero violations. The small clean-to-corrupted $F_1$ differences for these augmented models show no consistent direction across seeds and lie well within the seed-level variation, so they should be read as no measurable change rather than as improvement under corruption. The reason the clean-trained structured methods are insensitive to corruption is readily stated: orbit aggregation averages $N$ predictions per orbit, and a local disturbance is attenuated by the same redundancy the symmetry provides. The honest summary is that corrupted-input training and symmetry structure address different failure modes. Augmentation restores fidelity; symmetry structure guarantees validity; and the two compose.

\begin{figure}[t]
\centering
\includegraphics[width=\linewidth]{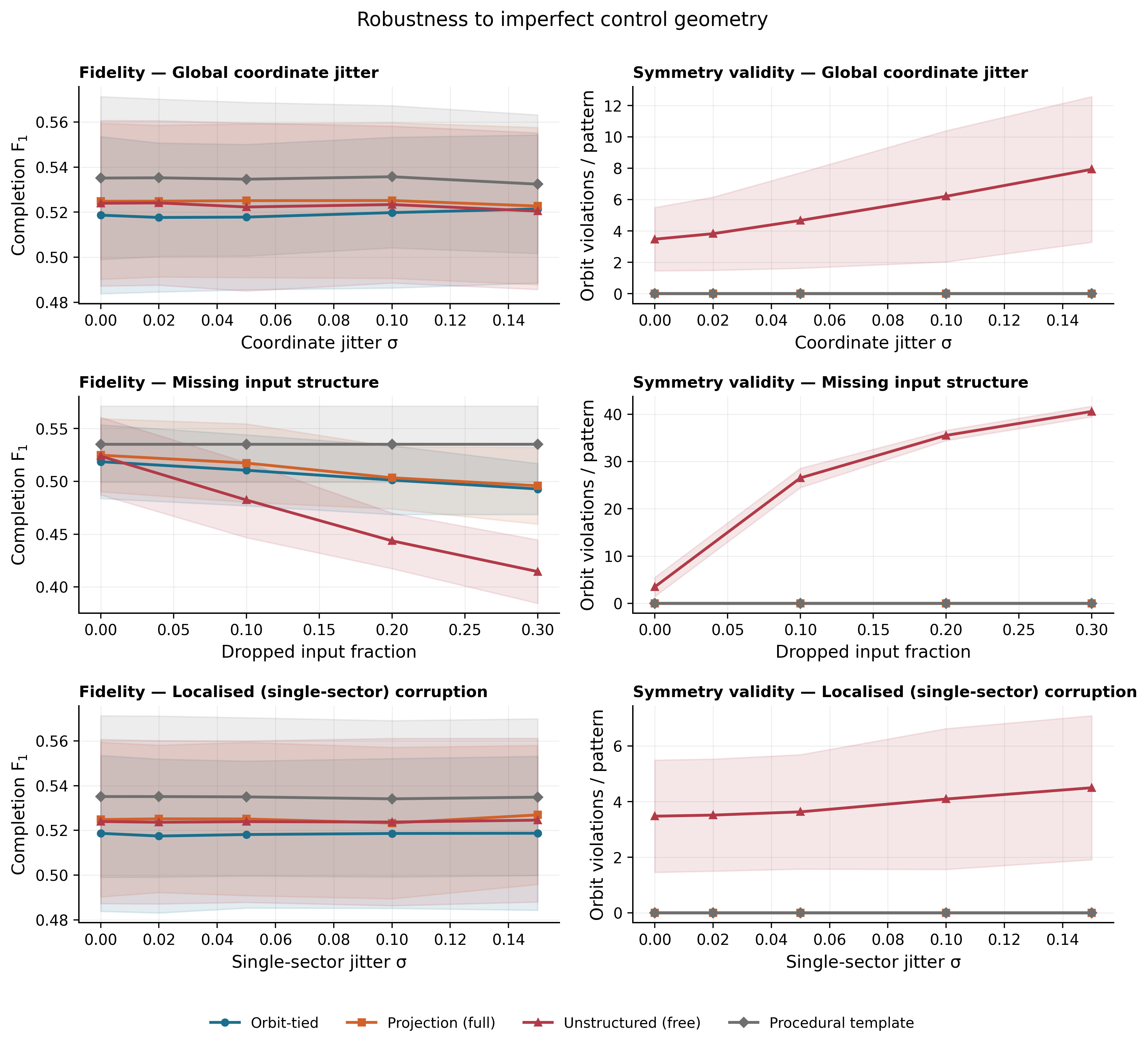}
\caption{Robustness to imperfect control geometry across three corruption modes. The left column reports fidelity, and the right column reports exact orbit violations. Missing input structure exposes the value of symmetry-structured inference, since the clean-trained free model loses both fidelity and validity while both structured methods remain close to their clean operating point with exactly zero violations. The training-control study reported in the same section separates the fidelity and validity components of this contrast.}
\label{fig:robust}
\end{figure}

\begin{table}[t]
\centering
\caption{Robustness under $30\%$ missing control geometry, reported as means over three seeds. Augmented variants are trained with missing-geometry augmentation and evaluated under the identical protocol.}
\label{tab:robust}
\begin{tabular}{@{}lcccc@{}}
\toprule
Method & Clean $F_1$ & Corrupted $F_1$ & Drop & Violations \\
\midrule
Orbit-tied & $0.518$ & $0.502$ & $0.017$ & $0$ \\
Projection (full) & $0.527$ & $0.509$ & $0.018$ & $0$ \\
Unstructured (free) & $0.524$ & $0.415$ & $0.109$ & $37.5$ \\
Orbit-tied (augmented) & $0.520$ & $0.524$ & $-0.004$ & $0$ \\
Projection (full, augmented) & $0.521$ & $0.522$ & $-0.001$ & $0$ \\
Unstructured (augmented) & $0.520$ & $0.517$ & $0.003$ & $9.0$ \\
Procedural template & $0.535$ & $0.535$ & $0.000$ & $0$ \\
\bottomrule
\end{tabular}
\end{table}

\begin{figure}[t]
\centering
\includegraphics[width=\linewidth]{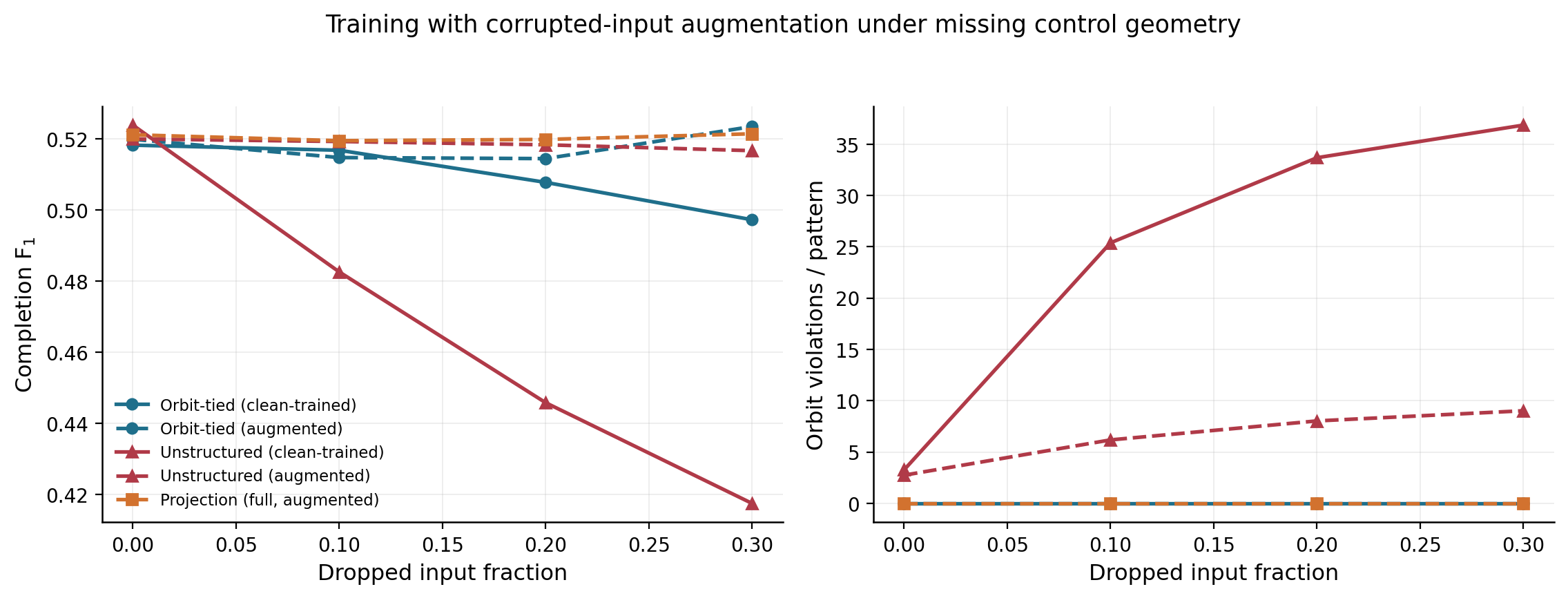}
\caption{Corrupted-input training control under missing control geometry. Augmentation largely restores the fidelity of the unstructured model but does not eliminate its orbit violations, while the augmented symmetry-structured variants hold fidelity essentially flat with exactly zero violations at every severity.}
\label{fig:control}
\end{figure}

\begin{figure}[t]
\centering
\includegraphics[width=\linewidth]{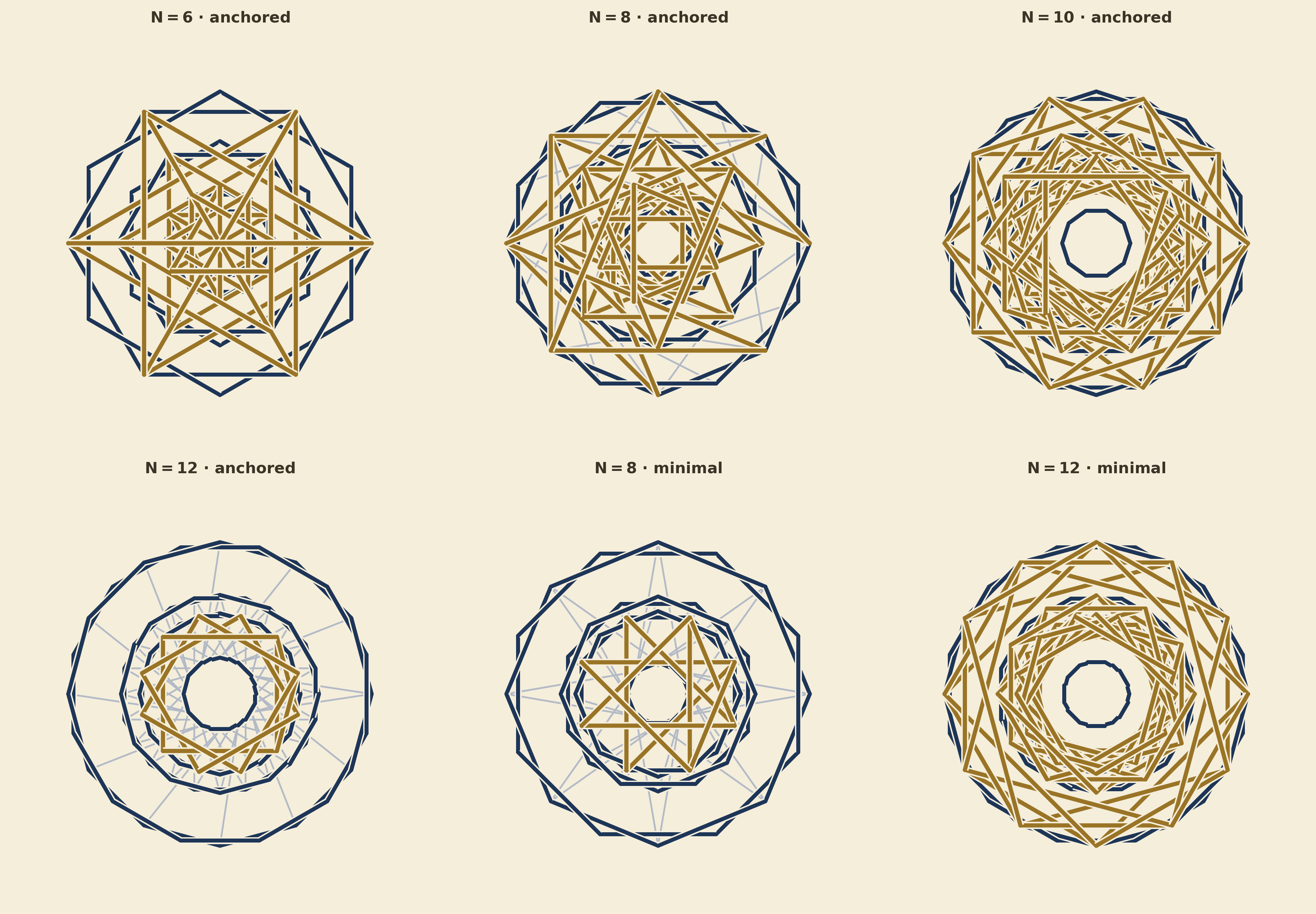}
\caption{Completed patterns rendered as scalable vector ornament without altering the predicted topology. The panels show curated completions across symmetry orders and motif families. The declared rendering hierarchy maps predicted star-polygon chords to gold bands, structural edges to indigo straps, and secondary connectors to thin pale lines.}
\label{fig:ornament}
\end{figure}

\subsection{Orbit tying versus projection, generalisation, and the template critique}\label{sec:res-equiv}\label{sec:res-template}

It is worth stating directly how orbit tying compares with inference-time projection, since the comparison shows that the operative factor is not any particular architectural device but the availability of an explicit orbit-level structure that can be enforced during decoding. Across all $42$ combinations of seed, corruption mode, and severity, the mean $F_1$ difference between the orbit-tied model and full projection is $-0.008$, with a standard deviation of $0.008$. It is positive in only five of the forty-two cells and remains slightly negative at the highest levels of corruption. Orbit-tied training therefore offers no fidelity advantage over projection in this domain, and we report the equivalence rather than obscure it. The result is informative in itself: identical validity and indistinguishable fidelity from two different enforcement mechanisms identify the candidate geometry and its orbit organisation as the operative ingredient, with the particular device secondary. What orbit tying retains, and projection does not, is an unconditional guarantee that holds for any orbit-level selection policy without an additional step, which matters wherever every output must be valid and a forgotten repair stage constitutes a failure mode.

Fidelity is also stable across symmetry orders. In the held-out experiment, with models trained on orders six, eight, and twelve and evaluated at order ten and the order supplied at inference, both the orbit-tied model and projection reach $0.524$. We describe this as conditional generalisation rather than zero-shot transfer, since the order is provided explicitly. Across training budgets from 100 to 600 graphs, the difference between structured and unstructured variants shows no consistent direction and never exceeds 1 $F_1$ point, so the constraint neither helps nor hinders learning at any budget tested.

The template baseline completes the evaluation argument begun in Section~\ref{sec:res-validity}. Its leading $F_1$ of $0.534$ rests on precision near $0.42$ and recall near $0.78$, with an edge-density error of about $0.27$, roughly three times the value of $0.09$ shared by the neural methods. The template simply selects far too many edges, and the overlap metric rewards the excess. Although $F_1$ penalises low precision through its harmonic form, it does not expose the structural or density profile of a completion, so a leading overlap score can conceal a qualitatively different output. Weighting precision more heavily makes this explicit: the $F_{0.5}$ score computed from the mean precision and recall in Table~\ref{tab:main} is about $0.46$ for the template against about $0.51$ for each of the three neural predictors, and the template leads only when recall is weighted equally with precision. Together with the validity audit, this establishes that assessing Islamic geometric completion requires calibration and validity measures in addition to overlap.

\subsection{Qualitative results}\label{sec:res-qual}

Figure~\ref{fig:ornament} concludes the evaluation with completed patterns rendered as vector ornaments across a range of symmetry orders and motif families. Every panel is exactly $N$-fold symmetric by Proposition~\ref{thm:validity}, and the strapwork rendering alters only the appearance, not the underlying geometry. The release pairs each styled panel with its plain topological completion so that the two may be compared directly, and includes scalable vector exports of every ornament figure for fabrication.

\section{Discussion}\label{sec:discussion}

\subsection{What symmetry structure provides and what it does not}\label{sec:disc-structure}

The results are best understood by separating the role of learning from the role of geometric knowledge. The neural network performs the statistical part of the task: it predicts which candidate edges and refinements are likely to complete the pattern. The geometric structure defines the admissible output space and specifies what counts as a valid completion. These two components are complementary. Learning improves fidelity, while knowledge-constrained decoding guarantees validity.

The experiments show that symmetry structure provides three main benefits. It gives exact validity supported by a construction proof; it gives deterministic behaviour under any orbit-level selection policy; and it preserves validity even when the input control geometry is corrupted. These benefits are obtained without any meaningful loss of fidelity within the pre-specified margin. At the same time, the structure does not give a fidelity advantage over inference-time projection, nor does it provide a consistent data-efficiency benefit or superior generalisation across symmetry orders. The results therefore locate the contribution of symmetry structure quite precisely: its value lies in exactness. The training-control experiment reinforces this point. Augmenting the unstructured model with corrupted inputs recovers much of its lost fidelity under missing geometry, but it does not restore exact validity. The strongest configuration is obtained when corrupted-input training and symmetry-structured decoding are used together.

This equivalence between the constructive and projected versions is informative. It shows that, in this setting, the candidate geometry and its organisation into rotational orbits are more important than the particular mechanism used to enforce symmetry. Orbit tying remains the natural choice when every output must be certified, because its guarantee is intrinsic to the model and cannot be lost by omitting a post-processing step. Projection, however, is an effective alternative when one wishes to retrofit an existing unconstrained predictor without rebuilding the architecture.

The practical implications for Islamic geometric design are direct. Exact symmetry is not merely an aesthetic preference in this domain; it is a defining structural requirement. Vector output is also the natural format for construction, teaching, and fabrication, while sparse control geometry provides an interface that mirrors traditional construction practice. The robustness result is especially relevant here. Control geometry obtained from sketches, scans, or partial records will often be incomplete, yet the structured methods continue to produce valid completions under precisely this kind of deficiency.

\subsection{Limitations and future work}\label{sec:disc-limits}

Several limitations should be noted. All quantitative results are obtained from procedurally generated synthetic patterns, so the evaluated style range is limited by the generator, and no historical corpus is used in the quantitative experiments. The candidate lattice is also procedural, which restricts the model to the structures it can represent; some of these structures are visually dense. In addition, interlacing is currently rendered as a visual effect rather than predicted as part of the pattern topology, and the framework does not yet cover reflective or wallpaper-group symmetries. Inference-time projection remains competitive with the constructive model throughout the experiments, and no human aesthetic study has been conducted. We also do not claim superiority over procedural construction systems, which remain the appropriate reference for faithful reproduction of documented pattern families.

The robustness tests have further limits. The missing-geometry experiment removes vertices independently, whereas real damage may be spatially correlated, for example in the form of a continuous missing sector or a damaged scan region. The single-sector experiment addresses a related case, but broader forms of structured loss remain to be tested. The knowledge encoded in the present system is geometric and constructional rather than historical or iconographic. The framework therefore guarantees structural validity with respect to the specified lattice and orbit rules, but it does not certify historical authenticity. Such authentication would require a different form of evidence, more closely matching a completion to known sources through compact feature representations, as in the verification of historical sketches \cite{ugail2026sketches}. The outputs should therefore be understood as structurally valid completions inspired by Islamic geometric design, unless they are separately validated by a domain expert.

These limitations point to several directions for future work. Modelling strapwork and interlacing as topology would make the weave part of the prediction rather than a rendering effect. Extending the orbit machinery to dihedral and wallpaper groups would bring reflective and periodic ornament within scope. Further progress would also require learning from vectorised historical corpora, conducting preference studies of the generated ornaments, developing interactive tools based on editable control geometry, and validating the vector exports for fabrication. Together, these extensions would move the framework closer to practical design and heritage applications.

\section{Conclusion}\label{sec:conclusion}

In this paper, we have introduced a symmetry-structured neural framework for completing Islamic geometric patterns from sparse control geometry. The task is formulated not as image synthesis, but as vector-geometric completion: the model predicts edges and bounded refinements over a candidate lattice whose edges are grouped into rotational orbits under $C_N$. Exact symmetry is therefore enforced as explicit geometric knowledge rather than learned only from data.

The main finding is that exact structural validity can be imposed without a measurable loss in fidelity. The orbit-tied variant guarantees exact $N$-fold symmetry, fixed anchor points, and bounded refinements for any input and any orbit-level selection rule, and these properties are verified numerically. Inference-time projection achieves the same validity as a post hoc alternative, showing that the essential ingredient is the orbit structure of the candidate geometry rather than the particular enforcement mechanism.

The experiments clarify the role of symmetry structure. On clean inputs, structured and unstructured predictors achieve comparable $F_1$ scores, but only the structured methods eliminate orbit-consistency violations. When control geometry is missing, an unstructured decoder loses fidelity and accumulates symmetry violations; training with corrupted inputs recovers much of the lost fidelity but does not restore exact validity. Augmentation and symmetry structure, therefore, address distinct failure modes: augmentation improves robustness in fidelity, while symmetry structure guarantees validity. The procedural template further shows that overlap alone is insufficient, since a high score can result from selecting too many edges.

Overall, the results support symmetry-structured neural completion as a reliable route to scalable vector ornament in the Islamic geometric tradition. The framework produces outputs that are not merely plausible, but certifiably valid with respect to explicit geometric rules. The present work is limited to rotational symmetry and procedurally generated patterns. Future work will extend the approach to richer historical families, predictive modelling of strapwork and interlacing, and broader symmetry groups such as dihedral and wallpaper symmetries.

\section*{Acknowledgments}
The authors would like to thank the Centre for Visual Computing and Intelligent Systems at the University of Bradford for providing the computing resources used in this work.

\section*{Data availability}
The complete experimental code, the trained model checkpoints, all per-seed and per-condition numerical results, and the figure-generation pipeline are openly available at \url{https://github.com/ugail/Neural-Completion-of-Islamic-Geometric-Patterns}, together with the random seeds and cached artefacts required to reproduce every experiment reported in this paper without retraining.

\section*{Declarations}
The authors declare no competing interests. No specific funding was received for this work. H.U. led the conceptualisation of the research and the development of the methodology. I.M. conducted the experimental design and computational experiments. Both authors reviewed the results, discussed the experimental findings, contributed to the interpretation, and approved the final manuscript.


\begin{thebibliography}{99}

\bibitem{bonner2017}
Bonner, J. (2017). \textit{Islamic Geometric Patterns: Their Historical Development and Traditional Methods of Construction}. Springer, New York. \url{https://doi.org/10.1007/978-1-4419-0217-7}

\bibitem{broug2019}
Broug, E. (2019). \textit{Islamic Geometric Patterns}. Revised and expanded edition. Thames and Hudson, London. ISBN 9780500294680.

\bibitem{necipoglu1996}
Necipo\u{g}lu, G. (1996). \textit{The Topkap{\i} Scroll: Geometry and Ornament in Islamic Architecture}. Getty Publications, Santa Monica. ISBN 9780892363353.

\bibitem{abas1995}
Abas, S. J., Salman, A. S. (1995). \textit{Symmetries of Islamic Geometrical Patterns}. World Scientific, Singapore. ISBN 9789810217044.

\bibitem{lu2007}
Lu, P. J., Steinhardt, P. J. (2007). Decagonal and quasi-crystalline tilings in medieval Islamic architecture. \textit{Science}, 315(5815), 1106--1110. \url{https://doi.org/10.1126/science.1135491}

\bibitem{grunbaum1987}
Gr\"unbaum, B., Shephard, G. C. (1987). \textit{Tilings and Patterns}. W. H. Freeman, New York. ISBN 9780716711933.

\bibitem{kaplan2000}
Kaplan, C. S. (2000). Computer generated Islamic star patterns. In \textit{Proceedings of Bridges 2000: Mathematical Connections in Art, Music, and Science}, 105--112.

\bibitem{kaplan2004}
Kaplan, C. S., Salesin, D. H. (2004). Islamic star patterns in absolute geometry. \textit{ACM Transactions on Graphics}, 23(2), 97--119. \url{https://doi.org/10.1145/990002.990003}

\bibitem{kaplan2005}
Kaplan, C. S. (2005). Islamic star patterns from polygons in contact. In \textit{Proceedings of Graphics Interface 2005}, 177--185. \url{https://doi.org/10.5555/1089508.1089538}

\bibitem{ranjazmay2023}
Ranjazmay Azari, M., Bemanian, M., Mahdavinejad, M., K\"orner, A., Knippers, J. (2023). Application-based principles of Islamic geometric patterns; state-of-the-art, and future trends in computer science/technologies: a review. \textit{Heritage Science}, 11, Article 22. \url{https://doi.org/10.1186/s40494-022-00852-w}

\bibitem{sayed2016}
Sayed, Z., Ugail, H., Palmer, I., Purdy, J., Reeve, C. (2016). Auto-parameterized shape grammar for constructing Islamic geometric motif-based structures. In Sourin, A., Gavrilova, M., Tan, C. (eds), \textit{Transactions on Computational Science XXVIII}. Lecture Notes in Computer Science, vol. 9590, 146--162. Springer, Berlin, Heidelberg. \url{https://doi.org/10.1007/978-3-662-53090-0_8}

\bibitem{elmahmudi2019deep}
Elmahmudi, A., Ugail, H. (2019). Deep face recognition using imperfect facial data. \textit{Future Generation Computer Systems}, 99, 213--225. \url{https://doi.org/10.1016/j.future.2019.04.025}

\bibitem{ugail2026forensic}
Ugail, H., Alawar, H. M., Zehi, A. A., Alkendi, A. M., Jaleel, I. L. (2026). Evaluation of latent diffusion enhanced face recognition under forensic image degradations. \textit{Discover Computing}, 29, Article 193. \url{https://doi.org/10.1007/s10791-026-10082-4}

\bibitem{ugail2026sketches}
Ugail, H., Ritch-Frel, J., Matuzava, I., Stork, D. G. (2026). Verification of historical sketches via one-class learning on compact feature representations. \textit{PLOS ONE}, 21(6), e0344796. \url{https://doi.org/10.1371/journal.pone.0344796}

\bibitem{huang2023beyond}
Huang, W., Jia, X., Zhong, X., Wang, X., Jiang, K., Wang, Z. (2023). Beyond the parts: learning coarse-to-fine adaptive alignment representation for person search. \textit{ACM Transactions on Multimedia Computing, Communications, and Applications}, 19(3), Article 105. \url{https://doi.org/10.1145/3565886}

\bibitem{ha2018}
Ha, D., Eck, D. (2018). A neural representation of sketch drawings. In \textit{International Conference on Learning Representations}. arXiv:1704.03477.

\bibitem{carlier2020}
Carlier, A., Danelljan, M., Alahi, A., Timofte, R. (2020). DeepSVG: a hierarchical generative network for vector graphics animation. In \textit{Advances in Neural Information Processing Systems}, 33, 16351--16361.

\bibitem{li2020}
Li, T.-M., Luk\'{a}\v{c}, M., Gharbi, M., Ragan-Kelley, J. (2020). Differentiable vector graphics rasterization for editing and learning. \textit{ACM Transactions on Graphics}, 39(6), Article 193. \url{https://doi.org/10.1145/3414685.3417871}

\bibitem{reddy2021}
Reddy, P., Gharbi, M., Luk\'{a}\v{c}, M., Mitra, N. J. (2021). Im2Vec: synthesizing vector graphics without vector supervision. In \textit{Proceedings of the IEEE/CVF Conference on Computer Vision and Pattern Recognition}, 7342--7351. \url{https://doi.org/10.1109/CVPR46437.2021.00726}

\bibitem{ugail2026neuraltension}
Ugail, H., Howard, N. (2026). A neural tension operator for curve subdivision across constant curvature geometries. \textit{arXiv preprint} arXiv:2603.28937.

\bibitem{bronstein2017}
Bronstein, M. M., Bruna, J., LeCun, Y., Szlam, A., Vandergheynst, P. (2017). Geometric deep learning: going beyond Euclidean data. \textit{IEEE Signal Processing Magazine}, 34(4), 18--42. \url{https://doi.org/10.1109/MSP.2017.2693418}

\bibitem{han2025}
Han, J., Cen, J., Wu, L., Li, Z., Kong, X., Jiao, R., Yu, Z., Xu, T., Wu, F., Wang, Z., Xu, H., Wei, Z., Zhao, D., Liu, Y., Rong, Y., Huang, W. (2025). A survey of geometric graph neural networks: data structures, models and applications. \textit{Frontiers of Computer Science}, 19, Article 1911375. \url{https://doi.org/10.1007/s11704-025-41426-w}

\bibitem{cohen2016}
Cohen, T. S., Welling, M. (2016). Group equivariant convolutional networks. In \textit{Proceedings of the 33rd International Conference on Machine Learning}, PMLR 48, 2990--2999.

\bibitem{kondor2018}
Kondor, R., Trivedi, S. (2018). On the generalization of equivariance and convolution in neural networks to the action of compact groups. In \textit{Proceedings of the 35th International Conference on Machine Learning}, PMLR 80, 2747--2755.

\bibitem{weiler2019}
Weiler, M., Cesa, G. (2019). General E(2)-equivariant steerable CNNs. In \textit{Advances in Neural Information Processing Systems}, 32.

\bibitem{satorras2021}
Satorras, V. G., Hoogeboom, E., Welling, M. (2021). E(n) equivariant graph neural networks. In \textit{Proceedings of the 38th International Conference on Machine Learning}, PMLR 139, 9323--9332.

\bibitem{khemani2024gnnreview}
Khemani, B., Patil, S., Kotecha, K., Tanwar, S. (2024). A review of graph neural networks: concepts, architectures, techniques, challenges, datasets, applications, and future directions. \textit{Journal of Big Data}, 11, Article 18. \url{https://doi.org/10.1186/s40537-023-00876-4}

\bibitem{kipf2017}
Kipf, T. N., Welling, M. (2017). Semi-supervised classification with graph convolutional networks. In \textit{International Conference on Learning Representations}. arXiv:1609.02907.

\bibitem{hamilton2017}
Hamilton, W. L., Ying, R., Leskovec, J. (2017). Inductive representation learning on large graphs. In \textit{Advances in Neural Information Processing Systems}, 30, 1024--1034.

\bibitem{velickovic2018}
Veli\v{c}kovi\'{c}, P., Cucurull, G., Casanova, A., Romero, A., Li\`{o}, P., Bengio, Y. (2018). Graph attention networks. In \textit{International Conference on Learning Representations}. arXiv:1710.10903.

\bibitem{simonovsky2018}
Simonovsky, M., Komodakis, N. (2018). GraphVAE: towards generation of small graphs using variational autoencoders. In Kurkov\'a, V., Manolopoulos, Y., Hammer, B., Iliadis, L., Maglogiannis, I. (eds), \textit{Artificial Neural Networks and Machine Learning -- ICANN 2018}. Lecture Notes in Computer Science, vol. 11139, 412--422. Springer, Cham. \url{https://doi.org/10.1007/978-3-030-01418-6_41}

\bibitem{you2018}
You, J., Ying, R., Ren, X., Hamilton, W. L., Leskovec, J. (2018). GraphRNN: generating realistic graphs with deep auto-regressive models. In \textit{Proceedings of the 35th International Conference on Machine Learning}, PMLR 80, 5708--5717.

\bibitem{vignac2023}
Vignac, C., Krawczuk, I., Siraudin, A., Wang, B., Cevher, V., Frossard, P. (2023). DiGress: discrete denoising diffusion for graph generation. In \textit{International Conference on Learning Representations}. arXiv:2209.14734.

\bibitem{diligenti2017}
Diligenti, M., Gori, M., Sacc\`{a}, C. (2017). Semantic-based regularization for learning and inference. \textit{Artificial Intelligence}, 244, 143--165. \url{https://doi.org/10.1016/j.artint.2015.08.011}

\bibitem{nandwani2019}
Nandwani, Y., Pathak, A., Mausam, Singla, P. (2019). A primal-dual formulation for deep learning with constraints. In \textit{Advances in Neural Information Processing Systems}, 32, 12157--12168.

\bibitem{li2023kgshfp}
Li, Z., Zhang, Q., Zhu, F., Li, D., Zheng, C., Zhang, Y. (2023). Knowledge graph representation learning with simplifying hierarchical feature propagation. \textit{Information Processing \& Management}, 60(4), Article 103348. \url{https://doi.org/10.1016/j.ipm.2023.103348}

\end{thebibliography}
\end{document}